\documentclass{article}

\usepackage{graphicx}
\usepackage{amsmath}
\usepackage{amssymb}
\usepackage{booktabs}


\usepackage{microtype}
\usepackage{csquotes}
\usepackage{amsthm}
\usepackage{mathtools}
\usepackage{array}
\usepackage[acronym]{glossaries}
\usepackage[algoruled, english]{algorithm2e}
\usepackage{enumitem} 
\usepackage{url}
\usepackage{hyperref}

\makeatletter
\providecommand*{\diff}{\@ifnextchar^{\DIfF}{\DIfF^{}}}%
\def\DIfF^#1{%
	\mathop{\mathrm{\mathstrut d}}%
	\nolimits^{#1}\gobblespace}
\def\gobblespace{%
	\futurelet\diffarg\opspace}%
\def\opspace{%
	\let\DiffSpace\!%
	\ifx\diffarg(%
	\let\DiffSpace\relax%
	\else%
	\ifx\diffarg[
	\let\DiffSpace\relax%
	\else%
	\ifx\diffarg\{%
	\let\DiffSpace\relax%
	\fi\fi\fi\DiffSpace}%
\makeatother
\DeclarePairedDelimiterX{\abs}[1]{\lvert}{\rvert}{%
	\ifblank{#1}{\:\cdot\:}{#1}%
}
\DeclarePairedDelimiterX{\scprod}[2]{\langle}{\rangle}{%
	\ifblank{#1}{\:\cdot\:}{#1}%
	\,,\mathopen{}%
	\ifblank{#2}{\:\cdot\:}{#2}%
}
\DeclarePairedDelimiterX{\norm}[1]{\lVert}{\rVert}{%
	\ifblank{#1}{\:\cdot\:}{#1}
}
\DeclarePairedDelimiterX{\setc}[2]{\lbrace}{\rbrace}{%
	#1\ifblank{#2}{}{\,\delimsize\vert\,\mathopen{}#2}
}

\DeclarePairedDelimiterX{\set}[1]{\lbrace}{\rbrace}{%
	
	#1
}

\newcommand{\Real}{\mathbb{R}}

\newtheorem{proposition}{Proposition}

%
\newacronym{AAE}{AAE}{average angular error}
\newacronym{CTF}{CTF}{coarse-to-fine}
\newacronym{BP}{BP}{Blinn-Phong}
\newacronym{IID}{i.i.d.}{independent and identically distributed}
\newacronym{RLM}{RLM}{regularising Levenberg-Marquardt}
\newacronym{PS}{PS}{photometric stereo}
\newacronym{SfS}{SfS}{shape-from-shading}
\newacronym{IIE}{IIE}{image irradiance equation}


\usepackage[capitalize]{cleveref}
\crefname{section}{Sec.}{Secs.}
\Crefname{section}{Section}{Sections}
\Crefname{table}{Table}{Tables}
\crefname{table}{Tab.}{Tabs.}

\begin{document}

\title{On the Regularising Levenberg-Marquardt Method for
Blinn-Phong Photometric Stereo}

\date{}

\author{Georg Radow and Michael Breu\ss\\
	Chair of Applied Mathematics\\
	Brandenburg University of Technology Cottbus-Senftenberg\\
	{\tt\small \{radow,breuss\}@b-tu.de}
}
\maketitle

\begin{abstract}
Photometric stereo refers to the process to compute the 3D shape
of an object using information on illumination and reflectance 
from several input images from the same point of view.
The most often used reflectance model is the Lambertian reflectance,
however this does not include specular highlights in input images.
In this paper we consider the arising non-linear optimisation problem when 
employing Blinn-Phong reflectance for modeling specular effects.
To this end we focus on the regularising Levenberg-Marquardt scheme.
We show how to derive an explicit bound that gives information
on the convergence reliability of the method depending on given data,
and we show how to gain experimental evidence of numerical
correctness of the iteration by making use of the Scherzer condition.
The theoretical investigations that are at the 
heart of this paper are supplemented by some tests with real-world
imagery.
\end{abstract}

\section{Introduction}
\label{sec:introduction}

The \gls{PS} problem is a fundamental task in computer vision \cite{Horn1986}. 
The aim of \gls{PS} is to infer the 3D shape of an object from a 
set of multiple images. Thereby the images depict an object from 
the same perspective, but the illumination direction changes throughout the images. 
An important information besides the illumination is the light reflectance 
of the object. The classic \gls{PS} model \cite{Woodham1978,Woodham1980}
is formulated in terms of Lambertian light reflectance. 
A Lambertian surface is characterised by diffuse reflectance and the independence of
perceived shading from the viewing angle. The Lambertian set-up is certainly convenient 
for modeling, as it represents the most simple mathematical model 
for reflectance, and thus resulting formula and inverse problems are relatively 
simple. However, it is quite well known that in \gls{PS} specular highlights
\cite{Khanian2018} as well as non-Lambertian diffuse effects \cite{McGunnigle2012} 
may have an important impact on 3D reconstruction.
\par 
Let us also comment on some other basic characteristics of \gls{PS}. 
Depending on the knowledge on the lighting, one discerns 
between calibrated and uncalibrated \gls{PS}. 
In this work we consider only the calibrated case, where lighting directions 
and intensities are known. Furthermore, the final goal of \gls{PS} is to obtain a depth map, 
such that for each relevant image pixel three-dimensional information of the depicted object 
is obtained. While some approaches tackle this problem directly in terms of 
depth values \cite{Mecca-etal-2014}, the more common strategy is to divide depth computation
into two sub-problems. In doing so at first a map of normal vectors is computed, from which 
the (relative) depth is obtained in a second step. See for instance \cite{Queau2017b} 
for a survey on surface normal integration. 
In this paper we only consider the first of the latter tasks, 
that is to find the normal vectors.
Another aspect is sometimes the projection performed by the camera during image acquisition,
often leading to orthographic or perspective models, respectively. In this work
we address effectively both settings.

{\bf Our contribution.}
In this paper, we consider some theoretical aspects of practical value 
in the optimisation of \gls{PS} when using Blinn-Phong reflectance.
Here we extend in several ways upon previous work; let us especially 
refer to \cite{Khanian2018}, where the Blinn-Phong model is employed 
in a similar way as here. Thereby, we consider to include the potentially 
most important specularity parameter, the so-called shininess, as an unknown in the optimisation, 
which is in contrast to \cite{Khanian2018} and many other works in the field.
The approximate solution of the non-linear optimisation problem arising pixel-wise is performed
by the regularising Levenberg-Marquardt method, see especially \cite{Hanke1997}.
As this is an iterative method, it is important to assess the influence of initialisation 
on the convergence and to give a rigorous bound as a stopping criterion. Furthermore as the problem is non-linear, one can observe in practical examples, that it may be difficult to minimise the underlying residual. To address this issue we investigate the use of a \gls{CTF} scheme as well as an initialisation obtained through classical \gls{PS}. We show how to explore Scherzer's criterion~\cite{Hanke2010}, which appeared in \cite{Hanke1995} for the first time.
This criterion is considered for theoretical purposes within the construction of the 
method, in order to assess the convergence property in our \gls{PS} problem experimentally.

\section{Classical Photometric Stereo}
\label{sec:classicalPS}
Let us reiterate the classic \gls{PS} approach of 
Woodham~\cite{Woodham1978,Woodham1980}.
Given is a set of $m\geq 3$ 
images $\left(\mathcal{I}_{1},\dots,\mathcal{I}_{m}\right)^\top\eqqcolon\mathcal{I}$, so that $\mathcal{I}:\Omega\rightarrow\Real^{m}$,
along with the corresponding lighting directions  
$L_{k}\in\Real^{3}$ with 
$\norm*{L_{k}}=1$ for $k=1,\dots,m$, with associated intensities $l_{k}\geq0$. Throughout the paper $\norm{}$ denotes the Euclidean norm or the induced spectral norm. 
The object to be reconstructed is depicted usually as a non-rectangular domain 
$\Omega\in\Real^2$, which is embedded in the image domain.\par 
The surface normal vectors $\mathcal{N}:\Omega\rightarrow\Real^{3}$ with 
$\norm{\mathcal{N}(x,y)}=1$ for all $(x,y)^\top\in\Real^2$ and the albedo 
$\rho^d:\Omega\rightarrow\Real$ are fitted through a least squares approach, 
by minimising
\begin{equation}
	\iint_{\Omega}\norm*{ 
		\mathcal{R}^{\text{L}}(x,y)
		-\mathcal{I}(x,y)}^{2}\diff x\diff y,
	\label{eq:leastSquares}
\end{equation}
with reflectance function 
$\mathcal{R}^{\text{L}}\coloneqq\left(\mathcal{R}^{\text{L}}_{1},\dots, 
\mathcal{R}^{\text{L}}_{m}\right)^{\top}$, consisting of components
\begin{equation}
	\mathcal{R}^{\text{L}}_{k}\coloneqq
	\rho^{d}l_{k}L^{\top}_{k}\mathcal{N},\quad
	k=1,\dots,m.
	\label{eq:reflCont}
\end{equation}
In practice this boils down to finding a local solution $N\in\Real^3$ 
at every sample location $(x,y)^\top$ for the problem
\begin{equation}
	\min_{N}\norm*{LN-I}^2,\quad 
	L\coloneqq\begin{pmatrix}
		l_1L_1^\top \\
		\vdots\\
		l_mL_m^\top
	\end{pmatrix}, 
	\quad I\coloneqq\mathcal{I}(x,y).
\end{equation}
This, in turn, leads to the computation of the normal vectors and, as a byproduct, the albedo 
according to
\begin{equation}
	N=\left(L^\top L\right)^{-1}L^\top I,\qquad
	\rho^d(x,y)=\norm*{N},\qquad 
	\mathcal{N}(x,y)=N / \norm*{N} \,.
\end{equation}\par
%
\section{Blinn-Phong Photometric Stereo}
In the general least squares approach \cref{eq:leastSquares}, we can modify the reflectance function 
to account for non-Lambertian effects. To this end we investigate the \gls{BP} 
model~\cite{Phong1975,Blinn1977}, which has the form $\mathcal{R}^{\text{BP}}\coloneqq\left(\mathcal{R}^{\text{BP}}_{1},\dots, 
\mathcal{R}^{\text{BP}}_{m}\right)^{\top}$ with components
\begin{equation}
	\mathcal{R}^{\text{BP}}_{k}\coloneqq
	\rho^{d}l_{k}L^{\top}_{k}\mathcal{N}+ 
	\rho^{s}h_{k}\max\left\lbrace 0,\mathcal{H}^{\top}_{k}\mathcal{N}\right\rbrace^{\alpha},
	\label{eq:reflectanceBP}
\end{equation}
$k=1,\dots,m$. 
We observe by \eqref{eq:reflectanceBP} that in the \gls{BP} model,
diffuse reflection as in \eqref{eq:reflCont} is supplemented by a specular reflection term.
Here $\rho^{s}:\Omega\rightarrow\Real$ denotes the specular albedo. Another material 
parameter is the specular sharpness or shininess $\alpha:\Omega\rightarrow\Real$. The halfway 
vectors $\mathcal{H}_{k}:\Omega\rightarrow\Real^{3}$ depend on the viewing directions 
$\mathcal{V}:\Omega\rightarrow\Real^{3}$ and are computed for $k=1,\dots,m$ as
\begin{equation}
	\mathcal{H}_{k}(x,y)\coloneqq H_k/ \norm*{H_k}, \qquad 
	H_k \coloneqq L_k+\mathcal{V}(x,y).
\end{equation}
Making use of focal length $f$,
the viewing directions $\mathcal{V}^{\perp}$ and $\mathcal{V}^{\angle}$ 
in the orthographic and perspective setting respectively are
\begin{equation}
\mathcal{V}^{\perp}=(0,0,1)^{\top}, \qquad	\mathcal{V}^{\angle}(x,y)=(x,y,f)^{\top}.
\end{equation}
We reinterpret $l_{k}$ as diffuse intensity of the light source and 
denote $h_{k}\geq 0$ as specular intensity. 
To ensure that image intensities are only increased due to diffuse and specular terms, it 
is reasonable to enforce $\rho_{d},\rho_{s}\geq 0$. 
Furthermore $\rho_{d},\rho_{s}\leq 1$ ensures 
that at most as much image intensity is added as light intensity is supplied 
by each light source. 
Finally, it is reasonable to enforce $\alpha>1$ to actually 
produce specular highlights through the specular term.
\par
The \gls{BP} model was originally proposed for computer graphics. 
It is not based on physical laws, but it enables to create plausible images with a still simple model compared to other possible approaches.
Despite its simplicity, for use in 
inverse problems in computer vision, the non-linearities in \cref{eq:reflectanceBP} may 
pose considerable hurdles.\par 
Let us now discuss the modeling of the components in \cref{eq:reflectanceBP} along with a few adaptations we employ. First we turn our attention to the normal vectors $\mathcal{N}$. One may model them through derivatives of the depth or its logarithm. In this approach we may parametrise them at a specific location 
through depth derivatives $p,q$ as
\begin{equation}
\mathcal{N}(x,y) = \frac{N(p,q)}{\norm*{N(p,q)}}.
\label{eq:normalisation}
\end{equation}
However the step of obtaining a normal vector of length 1 in \cref{eq:normalisation} adds another layer of non-linearity to the model. In numerical experiments we found this approach to be not very reliable. Therefore we opt for an approach in analogy to classical \gls{PS}. In \cref{eq:reflectanceBP} we replace $\rho^d\mathcal{N}=N$ introducing the auxiliary variable $r=\rho^{s}/(\rho^{d})^\alpha$.
By furthermore replacing $\alpha=1+\exp(a)$ we ensure that $\mathcal{R}^\text{BP}$ has continuous first derivatives. \cref{eq:reflectanceBP} then takes the form
\begin{equation}
	\mathcal{R}^{\text{BP}}_{k}(N,r,a)=
	l_{k}L^{\top}_{k}N+ 
	rh_{k}\max\left\lbrace 0,\mathcal{H}^{\top}_{k}N\right\rbrace^{1+\exp(a)},
\end{equation}
with $r,a\in\Real$ and $N\in\Real^3$.
%
\section{On the Optimisation Strategy}
\label{sec:optimisation}
With \gls{BP} reflectance, we have to solve a non-linear least squares problem, 
to which end we utilise the \gls{RLM} scheme~\cite{Hanke1997,Hanke2010}. 
Writing the underlying task in standard notation, with this algorithm one may 
aim to find a solution $\vec{x}$ of the problem
\begin{equation}
F(\vec{x}) = \vec{y},\qquad F:\Real^n\rightarrow\Real^m,
\label{eq:rlmProblem}
\end{equation}
with a known differentiable function $F$. 
Let us note that the description and discussion of the \gls{RLM} algorithm 
in~\cite{Hanke2010} is in a more general setting. For simplicity we only 
give an overview of the algorithm based on finite dimensional spaces, 
as is fitting for the problem at hand.\par
It is furthermore assumed that the original data $\vec{y}$ is not known, 
but with some $\delta>0$ an estimate is required on how good the given 
data $\vec{y}^\delta$ approximates the original data, according to
\begin{equation}
\norm*{\vec{y}^\delta-\vec{y}}\leq\delta.
\label{eq:noiselevel}
\end{equation}
Then with some starting point $\vec{x}_0$ the iterative rule takes the form
\begin{equation}
    \vec{x}_{k+1}=\vec{x}_k+\left(F'(\vec{x}_k)^\top F'(\vec{x}_k)+\alpha_k I_n\right)^{-1}
    F'(\vec{x}_k)^\top\left(\vec{y}^\delta-F(\vec{x})\right)
\end{equation}
with Jacobian matrix $F'$, $n\times n$-dimensional identity matrix $I_n$ and a 
regularisation weight $\alpha_k>0$ such that with a preassigned 
$\rho\in(0,1)$ the new iterate $\vec{x}_{k+1}$ fulfils
\begin{equation}
    \norm*{\vec{y}^\delta-F(\vec{x}_k)-F'(\vec{x}_k)\left(\vec{x}_{k+1}-\vec{x}_k\right)} 
    =\rho\norm*{\vec{y}^\delta-F(\vec{x}_k)}.
    \label{eq:rho}
\end{equation}
The \emph{stopping criterion} of the \gls{RLM} scheme depends 
explicitly on the noise level $\delta$ in the given data. 
To stop at an iterate $\vec{x}_k$, it has to fulfil
\begin{equation}
\norm*{\vec{y}^\delta-F(\vec{x}_k)} \leq \tau\delta,
\label{eq:RLMstop}
\end{equation}
with a preassigned $\tau>2$, fulfilling $\rho\tau>1$. For numerical experiments we set $\rho=0.5,~\tau=2.5$, following~\cite{Hanke2010}.\par
The discussion of the \gls{RLM} scheme in \cite{Hanke2010} relies on the strong Scherzer condition \cite{Hanke1995}. For the Jacobian matrices at two points $\vec{x}_1,\vec{x}_2\in\Real^n$ there exists a matrix $R=R(\vec{x}_1,\vec{x}_2)$ such that
$F'(\vec{x}_1) = R F'(\vec{x}_2)$
and
\begin{equation}
    \norm*{R-I_m}\leq C^R\norm*{\vec{x}_1-\vec{x}_2}
    \label{eq:ScherzerConstant}
\end{equation}
with some $C^R>0$, which is constant for all $\vec{x}_1,\vec{x}_2\in\Real^n$. This condition imposes a certain regularity of the Jacobian matrix $F'$. In this context we are interested in a local approximation of $C^R$. For two consecutive iterations $\vec{x}_k,\vec{x}_{k+1}$ we estimate $R$ as a solution of 
$F'(\vec{x}_k) = R(\vec{x}_k,\vec{x}_{k+1}) F'(\vec{x}_{k+1})$
with minimal norm. Then we can locally approximate the constant in \cref{eq:ScherzerConstant} as
\begin{equation}
    C_k^{R,\text{loc}}=\frac{\norm*{R(\vec{x}_k,\vec{x}_{k+1})-I_m}}{\norm*{\vec{x}_k-\vec{x}_{k+1}}}.
    \label{eq:LocScherzerConstant}
\end{equation}\par 
Since $F$ in \cref{eq:rlmProblem} is nonlinear, we employ a \gls{CTF} framework. In doing so the data is scaled to a coarser scale, \emph{i.e.} to a lower resolution. The obtained result is then used as initialisation on the next finer scale, until we arrive at the original resolution.\par 
Let us focus on the assumption \cref{eq:noiselevel}. The noise level $\delta$ governs the 
stopping criterion of the \gls{RLM} scheme. If \cref{eq:noiselevel} is not fulfilled then the 
iterates may actually diverge.\par 
At this point we make the assumption that our data $\mathcal{I}(x,y)$ is a realisation of the \gls{BP} model corrupted by additive 
white Gaussian noise, \emph{i.e.}~it can be modelled as
\begin{equation}
	\mathcal{I}(x,y)=\mathcal{R}(x,y)+\varepsilon(x,y),\quad \text{for }(x,y)^{\top}\in\Omega.
\end{equation}
Here $\varepsilon(x,y)$ is a realisation of a multivariate normal distribution, such that the $m$ 
components are \gls{IID} with mean zero and standard deviation $\sigma>0$, the corresponding 
density function is
\begin{equation}
	f(X)=
	\frac{1}{\sqrt{2\pi}^{m}\sigma^{m}}\exp\left(-\frac{1}{2\sigma^{2}}\sum_{i=1}^{m}X_{i}^{2}\right),
	\label{eq:probDensity}
\end{equation}
\emph{cf.}~\cite{Prince2012}. The probability that \cref{eq:noiselevel} holds can be computed with the following result. The 
proof, which is technical but straightforward, 
is included for the readers convenience. The following result is also related to the Chi distribution.
\begin{proposition}
    \label{prop:1}
	Let $m\in\mathbb{N}$, $\delta>0$ and let $\varepsilon$ be a realisation of an $m$-dimensional 
	multivariate normal distribution with 
	mean zero, standard deviation $\sigma>0$ and density \cref{eq:probDensity}. The 
	probability of $P\coloneqq P(\norm{\varepsilon}\leq\delta|\sigma,m)$ can be computed as follows:
	\begin{enumerate}[label=(\roman{*}), ref=(\roman{*})]
		\item If $m$ is even, then
		\begin{equation}
			P=1-\exp\left(-\frac{\delta^{2}}{2\sigma^{2}}\right)
			\sum_{i=0}^{\frac{m}{2}-1}\left(\frac{\delta^{2}}{2\sigma^{2}}\right)^{i}\frac{1}{i!}.
			\label{eq:6prop6.1i}
		\end{equation}
		\item If $m$ is odd, then
		\begin{multline}
			P=\sqrt{\frac{2}{\pi}}
			\Bigg(
			\frac{1}{\sigma}\int_{0}^{\delta}\exp\left(-\frac{r^{2}}{2\sigma^{2}}\right)\diff r\\
			-\exp\left(-\frac{\delta^{2}}{2\sigma^{2}}\right)
			\sum_{i=1}^{\frac{m-1}{2}}
			\Bigg(\left(\frac{\delta}{\sigma}\right)^{m-2i}\prod_{j=1}^{\frac{m+1}{2}-i}\left(\frac{1}{2j-1}\right)\Bigg)\Bigg).
			\label{eq:6prop6.1ii}
		\end{multline}
	\end{enumerate}
\end{proposition}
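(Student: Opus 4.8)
The plan is to write $P$ as the integral of the density \eqref{eq:probDensity} over the Euclidean ball $\{\,X\in\Real^m : \norm{X}\leq\delta\,\}$ and to exploit its radial symmetry. First I would pass to spherical coordinates in $\Real^m$. Since the integrand depends on $X$ only through the radius $r=\norm{X}$, the angular integration contributes the surface area $\omega_{m-1}=2\pi^{m/2}/\Gamma(m/2)$ of the unit sphere $S^{m-1}$, and $P$ collapses to the one-dimensional expression
\[
P=\frac{\omega_{m-1}}{(\sqrt{2\pi}\,\sigma)^m}\,I_m,\qquad
I_m\coloneqq\int_0^\delta r^{m-1}\exp\!\left(-\frac{r^2}{2\sigma^2}\right)\diff r.
\]
Here one already recognises $I_m$ as a rescaled incomplete gamma integral, which is the precise link to the Chi distribution alluded to above; the rest of the argument is an explicit evaluation of $I_m$ together with a simplification of the prefactor.

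The central step is a downward recurrence for $I_m$ via integration by parts. Writing $r^{m-1}=r^{m-2}\cdot r$ and using that $r\exp(-r^2/(2\sigma^2))$ is the derivative of $-\sigma^2\exp(-r^2/(2\sigma^2))$, I obtain for $m\geq 3$
\[
I_m=-\sigma^2\delta^{m-2}\exp\!\left(-\frac{\delta^2}{2\sigma^2}\right)+(m-2)\,\sigma^2\,I_{m-2},
\]
where the boundary term at $r=0$ vanishes. Iterating lowers the exponent by two at each step, so the parity of $m$ dictates where the recursion terminates. For even $m$ it ends at the elementary base case $I_2=\sigma^2\bigl(1-\exp(-\delta^2/(2\sigma^2))\bigr)$, and the accumulated exponential--polynomial terms rearrange into \eqref{eq:6prop6.1i}. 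For odd $m$ it ends at $I_1=\int_0^\delta\exp(-r^2/(2\sigma^2))\diff r$, which has no elementary closed form and is carried along unevaluated; this surviving integral becomes the leading term of \eqref{eq:6prop6.1ii}, while the collected boundary contributions form the finite sum there.

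What remains is bookkeeping of constants. The prefactor $\omega_{m-1}/(\sqrt{2\pi}\,\sigma)^m$ simplifies to $2/(\Gamma(m/2)\,2^{m/2}\sigma^m)$, and here the even and odd cases diverge through the value of $\Gamma(m/2)$: the integer value $\Gamma(m/2)=(m/2-1)!$ yields the clean factorials of \eqref{eq:6prop6.1i}, whereas the half-integer value $\Gamma(m/2)=\sqrt{\pi}\,(m-2)!!/2^{(m-1)/2}$ produces the overall factor $\sqrt{2/\pi}$ and the odd-number products $\prod_j 1/(2j-1)$ of \eqref{eq:6prop6.1ii}. Simultaneously the telescoping products $(m-2)(m-4)\cdots$ generated by the recurrence must be merged with the powers of $2$ and $\sigma$ and then re-indexed so that the summation ranges ($0\leq i\leq m/2-1$ in the even case, $1\leq i\leq(m-1)/2$ in the odd case) and their coefficients match the stated formulas exactly. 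I expect this constant-chasing to be the only real obstacle: each individual step is routine, but aligning the recursion depth with the summation index and verifying the cancellation of the $\sigma$- and $2$-powers --- especially matching the double-factorial coefficients in the odd case --- requires the most care.
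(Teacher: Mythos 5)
Your proposal is correct and follows essentially the same route as the paper's proof: reduction to a one-dimensional radial integral via the surface area $2\pi^{m/2}r^{m-1}/\Gamma(m/2)$, the integration-by-parts recurrence $I_m=-\sigma^2\delta^{m-2}\exp(-\delta^2/(2\sigma^2))+(m-2)\sigma^2 I_{m-2}$, the parity split terminating at $I_2$ (elementary) or $I_1$ (kept as a Gaussian integral), and the integer versus half-integer evaluation of $\Gamma(m/2)$. The constant bookkeeping you defer is exactly what the paper carries out explicitly, and your recurrence, base cases and prefactor $2/(\Gamma(m/2)\,2^{m/2}\sigma^m)$ all check out.
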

\begin{proof}
	For any continuous probability density $f$ we have
	\begin{equation}
		P= P(\norm{\varepsilon}\leq\delta|\sigma,m)=
		\int_{\norm{X}\leq\delta}f(X)\diff X.
	\end{equation}
	Since the density function in \cref{eq:probDensity} is radially symmetric, this simplifies to
	\begin{equation}
		P=\int_{0}^{\delta}O_m(r)f(r,0,\dots,0)\diff r,
	\end{equation}
	where
	\begin{equation}
		O_m(r)= 2r^{m-1}\frac{\pi^{\frac{m}{2}}}{\Gamma\left(\frac{m}{2}\right)}
	\end{equation}
	denotes the surface area of a sphere with radius $r$ around the origin in $\Real^{m}$. $\Gamma$ 
	denotes the gamma function. Inserting \cref{eq:probDensity}, we write
	\begin{equation}
		P=\frac{2^{1-\frac{m}{2}}}{\sigma^{m}\Gamma\left(\frac{m}{2}\right)}
		\int_{0}^{\delta}r^{m-1}\exp\left(-\frac{r^{2}}{2\sigma^{2}}\right)\diff r.
		\label{eq:6prop6.1eq1}
	\end{equation}
	Since $\int r\exp(r^{2}/(2a))\diff r=a\exp(r^{2}/(2a))+c$, for $m>2$ the 
	integral in \cref{eq:6prop6.1eq1} can be simplified by partial integration, \emph{i.e.}
	\begin{multline}
		\int_{0}^{\delta}r^{m-2}\cdot r\exp\left(-\frac{r^{2}}{2\sigma^{2}}\right)\diff r\\
		=-\sigma^{2}\left[r^{m-2}\exp\left(-\frac{r^{2}}{2\sigma^{2}}\right)\right]_{r=0}^{\delta}
		+\sigma^{2}(m-2)\int_{0}^{\delta}r^{m-4}\cdot 
		r\exp\left(-\frac{r^{2}}{2\sigma^{2}}\right)\diff r.
	\end{multline}
	We now consider the two cases of $m$ being even or odd.\par 
	Let $m\in\mathbb{N}$ be even. Then repeated partial integration of the 
	integral \cref{eq:6prop6.1eq1} leads to 
	\begin{equation}
		\begin{split}
			&\int_{0}^{\delta}r^{m-1}\exp\left(-\frac{r^{2}}{2\sigma^{2}}\right)\diff r\\
			&=
			-\sum_{i=1}^{\frac{m}{2}-1}\sigma^{2i}\prod_{j=1}^{i-1}(m-2j)\left[r^{m-2i}\exp\left(-\frac{r^{2}}{2\sigma^{2}}\right)\right]_{r=0}^{\delta}\\
			&\quad 
			+\sigma^{m-2}\prod_{j=1}^{\frac{m}{2}-1}(m-2j)\int_{0}^{\delta}r\exp\left(-\frac{r^{2}}{2\sigma^{2}}\right)\diff
			r\\
			&=-\sum_{i=1}^{\frac{m}{2}}\sigma^{2i}\prod_{j=1}^{i-1}(m-2j)\left[r^{m-2i}\exp\left(-\frac{r^{2}}{2\sigma^{2}}\right)\right]_{r=0}^{\delta}\\
			&=-\sum_{i=1}^{\frac{m}{2}}\sigma^{2i}\frac{2^{i-1}\left(\frac{m}{2}-1\right)!}{\left(\frac{m}{2}-i\right)!}
			\left[r^{m-2i}\exp\left(-\frac{r^{2}}{2\sigma^{2}}\right)\right]_{r=0}^{\delta}\\
			&=\sigma^{m}2^{\frac{m}{2}-1}\left(\frac{m}{2}-1\right)!
			  -\sum_{i=1}^{\frac{m}{2}}\sigma^{2i}\frac{2^{i-1}\left(\frac{m}{2}-1\right)!}{\left(\frac{m}{2}-i\right)!}
			\delta^{m-2i}\exp\left(-\frac{\delta^{2}}{2\sigma^{2}}\right).
		\end{split}
		\label{eq:6prop6.1eq2}
	\end{equation}
	This formula can easily be verified for $m=2$, as in this case the initial integral simplifies to the form $\int r\exp(r^{2}/(2a))\diff r$. Inserting \cref{eq:6prop6.1eq2} and $\Gamma(m/2)=(m/2-1)!$ 
	into \cref{eq:6prop6.1eq1}, we obtain after an index shift \cref{eq:6prop6.1i}.\par 
	Now let $m\in\mathbb{N}$ be odd. Again we use repeated partial integration on the integral 
	in \cref{eq:6prop6.1eq1}, until we arrive at 
	\begin{multline}
			\int_{0}^{\delta}r^{m-1}\exp\left(-\frac{r^{2}}{2\sigma^{2}}\right)\diff r
			=\sigma^{m-1}\prod_{j=1}^{\frac{m-1}{2}}(2j-1)\int_{0}^{\delta}\exp\left(-\frac{r^{2}}{2\sigma^{2}}\right)\diff
			r\\-\sum_{i=1}^{\frac{m-1}{2}}\sigma^{2i}
			\frac{\prod_{j=1}^{\frac{m-1}{2}}(2j-1)}{\prod_{j=1}^{\frac{m+1}{2}-i}(2j-1)}
			\delta^{m-2i}\exp\left(-\frac{\delta^{2}}{2\sigma^{2}}\right).
	\end{multline}
	Plugging this together with 
	\begin{multline}
			\Gamma\left(\frac{m}{2}\right)=\Gamma\left(\frac{m-1}{2}+\frac{1}{2}\right)
		=\frac{\left(m-1\right)!\,\sqrt{\pi}}{\left(\frac{m-1}{2}\right)!\,2^{m-1}}\\
		=\frac{\prod_{j=1}^{\frac{m-1}{2}}\left((2j)(2j-1)\right)\sqrt{\pi}}
		{2^{\frac{m-1}{2}}\prod_{j=1}^{\frac{m-1}{2}}\left(2j\right)}
		=\frac{\sqrt{\pi}}{2^{\frac{m-1}{2}}}\prod_{j=1}^{\frac{m-1}{2}}(2j-1)
	\end{multline}
	into \cref{eq:6prop6.1eq1} leads to \cref{eq:6prop6.1ii}.
\end{proof}
%
\section{Experiments}
\label{sec:numExp}
%
\begin{figure}[tb]
\centering
\begin{tabular}{cccc}
	 \includegraphics[width=0.215\textwidth]{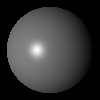}	
	 &\includegraphics[width=0.215\textwidth]{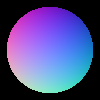}
	&\includegraphics[width=0.215\textwidth]{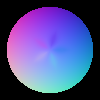}
	&\includegraphics[width=0.215\textwidth]{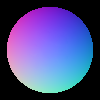}
\end{tabular}
	\caption{
	\emph{(left-to-right:)} one of the input images of the \emph{sphere}
	rendered using the \gls{BP} model;
	colour coded vector field of ground truth normal vectors; 
	classical \gls{PS} with preprocessing~\cite{Wu2010}, \gls{AAE} $1.02$; 
	developed \gls{BP} framework with \gls{CTF}, \gls{AAE} $0.37$
	\label{figure-new-1}
	}
\end{figure}

Since we focus on the computed 
vector fields of surface normals, it appears adequate to
employ colour coding of surface normals for visual assessment,
\emph{cf.}~Figure \ref{figure-new-1}. For quantitative evaluation we consider
here the standard \gls{AAE}, where the averaging 
is performed over the object domain. Let us note that we use the result obtained through 
classical \gls{PS} as an initialisation for the \gls{BP} model. Throughout the experiments we computed $\delta$ according to \cref{prop:1}, such that \cref{eq:noiselevel} is fulfilled with a probability of $95\%$. We observed that the choice of this confidence level is not critical for the outcome of our experiments.

\paragraph{Synthetic Test Example.}
As a synthetic experiment for our investigations 
we consider the \emph{sphere} example, see
Figure \ref{figure-new-1}. Let us note that we consider an 
orthographic setting for all the \emph{sphere} experiments.
As we observe in Figure \ref{figure-new-1}, in this experiment the developed
computational model and set-up enables to obtain a nearly
perfect result. For optimisation we employed in total $5$ input images,
of which we show here just one example.
For comparison, we give here the corresponding result obtained by Lambertian 
\gls{PS} applied at analogous input images where we filtered the specular 
highlights by the subspace technique proposed in \cite{Wu2010}, which is
supposed to make the input nearly Lambertian. As is confirmed here visually
as well as quantitatively, it appears favorable (at least in this example) 
to explore an explicit modeling like with the proposed \gls{BP} framework.

Let us note that in fact this test example may not be too easy, as can be observed
by the results obtained by preprocessing and Lambertian \gls{PS}. The reason 
is that the specular highlights in the input are not perfectly distributed 
over the sphere and may result in distortions if not being accounted for
sufficiently accurate in the model.

\paragraph{Evaluation of Scherzer's Condition.} As discussed in \cref{sec:optimisation}, between two iterates of the \gls{RLM} scheme 
we observe the local approximation $C_k^{R,\text{loc}}$ of the constant in \cref{eq:ScherzerConstant} 
according to \cref{eq:LocScherzerConstant}. As the Scherzer condition is an important assumption for the results in~\cite{Hanke2010}, we opt to add a break condition, 
where the algorithm stop if the estimate grows too large. In practice the algorithm is halted if we observe an iterate with $C_k^{R,\text{loc}}\geq 2000$. As can be seen in \cref{figure-Scherzer,figure-new-diligent} this is usually the case at locations where specular highlights may occur, as the angle between halfway vectors and surface normals becomes small. One may interprete this result 
in the way, 
that the energy that is minimised
features at highlights many 
small variations that makes it difficult to obtain 
a reliable local minimum.

We evaluated the restarting of the \gls{RLM} scheme with a larger parameter $\rho$ in \cref{eq:rho}, if it stopped before an iterate fulfils \cref{eq:RLMstop}. This may lead to a smaller trust region and to a more stable behaviour of the algorithm. However we did in general not observe 
a significant increase in quality. The results displayed here were thus computed without restarting the \gls{RLM} scheme, giving an account of the 
unstabilised version of the method.

\begin{figure}[tb]
\centering
\begin{tabular}{cc}
	 \includegraphics[width=0.215\textwidth]{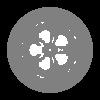}
	 \includegraphics[width=0.215\textwidth]{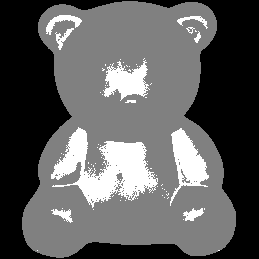}
\end{tabular}
	\caption{
	Algorithmic behaviour in the \emph{sphere} experiment \emph{(left)} and an example from the \emph{DiLiGent} data set~\cite{Shi2018} \emph{(right)}.
	White depicts the locations where the \gls{RLM} scheme stopped due to the $C_k^{R,\text{loc}}\geq 2000$ criterion.
	\label{figure-Scherzer}
	}
\end{figure}

\paragraph{Real World Test Example.}
In order to assess the properties and usefulness of the developed numerical
\gls{BP} framework, we exploit here a selected variety of 
examples taken from the \emph{DiLiGent} data set~\cite{Shi2018} which gives 
an account of photographed real-world objects with different 
reflectance properties. Here we do not employ a \gls{CTF} scheme, as we rely
on the initialisation obtained with classical \gls{PS}.
Let us note that the underlying model is now (in practice, weakly) 
perspective.

As can be visually assessed by means of Figure \ref{figure-new-diligent},
the proposed model along with its adaptations performs very reasonably
but in some details not perfect, depending on the actual example. 
For clarifying thereby the zones of influence of the specular terms
we depict masks showing the object parts where the \gls{BP} model
gives an effective contribution. When taking into account 
the properties of the considered examples, it appears especially
that the broad specularities as appearing in the input (teddy bear, goblet) 
may result in a certain inaccuracy. In turn, when highlights appear but 
are not too strong (cat, tea pot), results are quite convincing, given
that the underlying reflectance in these cases is supposed to be
non-linear in the diffuse reflectance as the underlying material 
is rough. In the tested real world setting from \emph{DiLiGent} 
the results are overall of similar quality to the preprocessed Lambertian 
method. Therefore we conjecture that our numerical \gls{BP} framework 
appears to be especially suited for dealing with objects with not too strong 
highlights, being at the same time able to tackle a certain range of
diffuse reflectance of rough materials.

\begin{figure*}[tb]
\centering
\begin{tabular}{cccc}
	\includegraphics[width=0.215\textwidth]{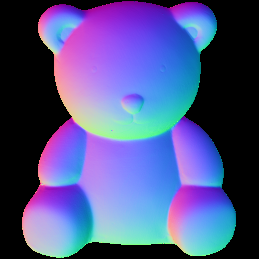}
	&\includegraphics[width=0.215\textwidth]{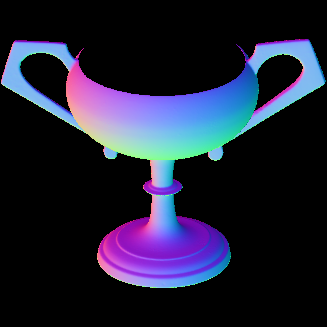}
	&\includegraphics[width=0.215\textwidth]{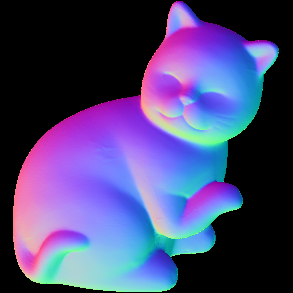}
	&\includegraphics[width=0.215\textwidth]{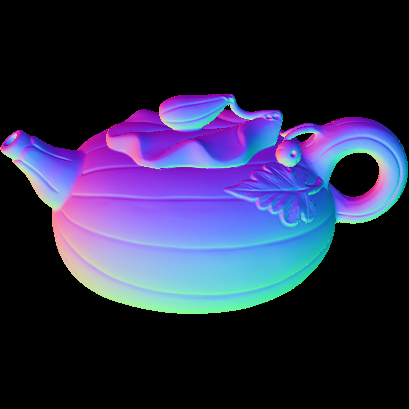}\\
	\includegraphics[width=0.215\textwidth]{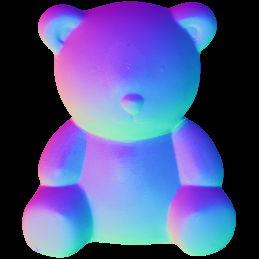}
	&\includegraphics[width=0.215\textwidth]{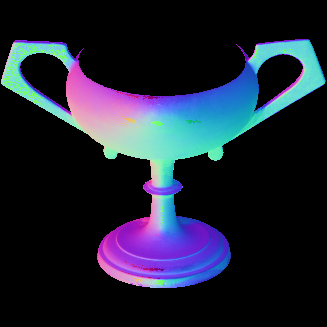}
	&\includegraphics[width=0.215\textwidth]{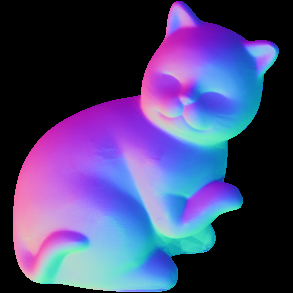}
	&\includegraphics[width=0.215\textwidth]{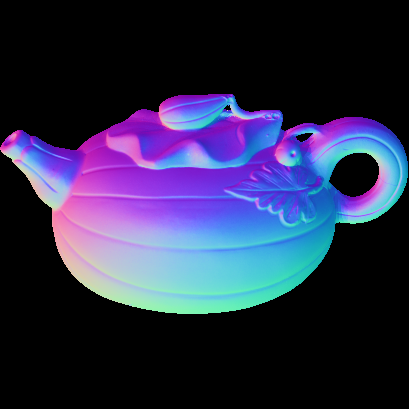}\\
	\includegraphics[width=0.215\textwidth]{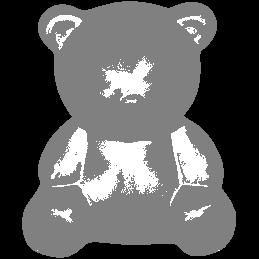}
	&\includegraphics[width=0.215\textwidth]{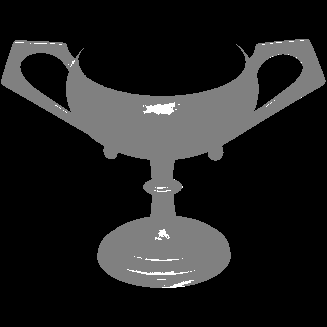}
	&\includegraphics[width=0.215\textwidth]{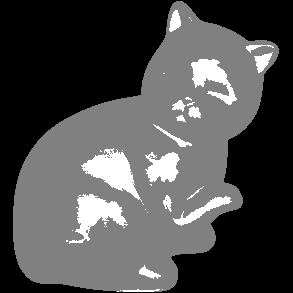}
	&\includegraphics[width=0.215\textwidth]{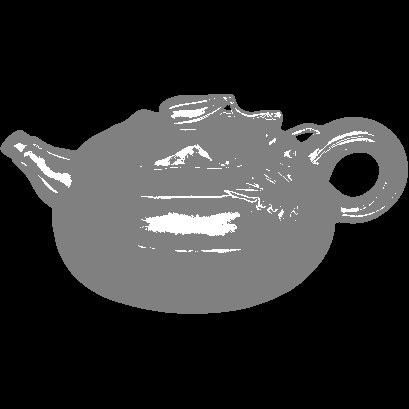}
\end{tabular}
	\caption{
	\emph{(left-to-right:)} Examples from \emph{Diligent} data sets. 
	\emph{(top-to-bottom:)} Visualisation of ground truth normals; 
	normal fields based on \gls{BP} (where we note the effect of the not satisfied Scherzer condition at some highlights at the goblet); 
	mask based on half directions. White depicts locations where the maximum of the cosines between halfway vectors and the normal vector obtained with classical \gls{PS} is $\geq0.99$.
	\label{figure-new-diligent}
	}
\end{figure*}
%

\section{Conclusion}
\label{sec:conclusion}
We discussed the \gls{BP} reflectance in the context of \gls{PS}. The augmentation of classical \gls{PS} with this reflectance model is straightforward, but solving the arising optimisation problem is less so. 
This task can be tackled with the \gls{RLM} scheme, which leads to satisfactory results.

The findings for the implementation of the \gls{RLM} scheme may be translated to other problems, since the assumption that the data follows a normal distribution is very common. The application of the \gls{BP} model to more complex data sets poses considerable hurdles, which may be adressed in future work.

{\small
\bibliographystyle{splncs04}
\bibliography{./refs}
}

\end{document}